\newtheorem{theorem}{Theorem}
\let\NAT@parse\undefined
\title{\LARGE \bf Modeling, Observability, and Inertial Parameter Estimation of a Planar Multi-Link System with Thrusters*}
\author{Nicholas B. Andrews$^{1}$ and Kristi A. Morgansen$^{1}$% <-this % stops a space
\thanks{*This work was supported in part by ONR Award N00014-23-1-2171}%
\thanks{$^{1}$Department of Aeronautics and Astronautics, University of Washington, Seattle, WA 98195, USA  
{\tt\small [{\href{mailto:nian6018@uw.edu}{nian6018}, \href{mailto:morgansn@uw.edu}{morgansn}}]@uw.edu}}
}
\begin{document}

\maketitle
\thispagestyle{empty}
\pagestyle{empty}

% --- abstract
\begin{abstract}
This research provides a theoretical foundation for modeling and real-time estimation of both the pose and inertial parameters of a free-floating multi-link system with link thrusters, which are essential for safe and effective controller design and performance. First, we adapt a planar nonlinear multi-link snake robot model to represent a planar chain of bioinspired salp robots by removing joint actuators, introducing link thrusters, and allowing for non-uniform link lengths, masses, and moments of inertia. Second, we conduct a nonlinear observability analysis of the multi-link system with link thrusters, proving that the link angles, angular velocities, masses, and moments of inertia are locally observable when equipped with inertial measurement units and operating under specific thruster conditions. The analytical results are demonstrated in simulation with a three-link system.
\end{abstract}

\section{INTRODUCTION}
The specific work here is motivated by advancements in underwater robotics, where remotely operated vehicles (ROVs) and autonomous underwater vehicles (AUVs) are widely used for inspection, mapping, and intervention tasks. In particular, this research is directed towards the next generation of small, flexible, and minimally capable underwater vehicles bioinspired by salps \cite{Villanueva2011-fj, Gatto2020-cn}—gelatinous marine organisms roughly the size and shape of a soda can that maneuver using jet propulsion by pumping water through their bodies. A distinctive feature of both salps and their bioinspired multi-link robotic counterparts is their ability to physically connect, forming long, flexible chains—often consisting of dozens of units—that dynamically reshape into structures such as spirals, rings, and arcs. 
% Significant progress has been made in developing underwater soft robotic mechanisms that replicate the jet propulsion of salps \cite{Jones2024-xy, Yang2021-rh}.
However, the challenges of modeling, accurately measuring, and controlling the state of these systems remain an open area of research.

Observability is a necessary property for any control system because it determines whether an estimator can uniquely reconstruct the system's state from measurements, which is essential for safe and efficient operation. In particular, while physical properties such as mass and inertia can be measured in a laboratory setting for minimally configurable rigid-body systems, they become significantly more challenging and costly to measure for highly configurable and soft systems. For example, in underwater vehicles, added mass—the resistance due to accelerating through a fluid—is typically measured by towing the vehicle through the water at different speeds. However, this approach is infeasible for a configurable system, as its size and shape are time-varying, potentially at the same rate as system motion. Instead of relying on pre-deployment experiments, enabling a system to estimate both its pose (position and orientation) and inertial properties in real-time is desirable.

Prior work on rigid-body inertial parameter estimation has leveraged nonlinear observability analyses, including studies on six-degree-of-freedom aircraft~\cite{Boyacioglu2023-mv, Sundquist2024-ot}. These analyses show that certain inertial parameters are observable with a single inertial measurement unit (IMU) and can be estimated in real time under specific external force conditions. A similar study in~\cite{Heinemann2025-le} derived analytical conditions for estimating a vehicle’s roll moment of inertia and demonstrated them through experiments. For multi-link systems, \cite{An1985-nq} proposed an algorithm to estimate the mass, center of mass (CM), and moment of inertia of each link in a fixed-base manipulator.

% The contributions of this work are twofold. First, we adapt a nonlinear multi-link snake robot model to describe a system actuated by link-mounted thrusters rather than joint actuators. Prior work has developed models, controllers, and analyses for terrestrial and underwater snake robots~\cite{Liljeback2012-fs, Kelasidi2014-fv}, as well as explored their observability and state estimator design~\cite{Rollinson2011-ti, Tully2011-ka}. Our second contribution is a nonlinear observability analysis of the adapted multi-link model with inertial measurement units (IMUs), demonstrating that link angles, angular velocities, masses, and moments of inertia are locally observable under specific thruster conditions. This analysis serves as the dual to the controllability analysis in~\cite{Yang2025-ja}, which employed a geometric mechanics framework for controllability and gait design, validated on \emph{LandSalp}, a kinematically representative three-link wheeled robot.

This work adapts a multi-link snake robot model to analyze a chain of bioinspired salp robots, with the key distinction lying in their actuation: joint actuators versus thrusters. Previous research has developed models, controllers, and gait analyses for both terrestrial and underwater snake robots~\cite{Liljeback2012-fs, Kelasidi2014-fv}, and has explored their observability and state estimator design~\cite{Rollinson2011-ti, Tully2011-ka}. The main contribution of this work is a nonlinear observability analysis of the adapted multi-link model equipped with IMUs, demonstrating that the link angles, angular velocities, masses, and moments of inertia are locally observable under specific thruster conditions. This analysis complements the controllability study in~\cite{Yang2025-ja}, which employed a geometric mechanics framework for controllability and gait design, validated on \emph{LandSalp}, a representative three-link wheeled robot.

The structure of this paper is as follows: Section \ref{sec:model} introduces the salp-inspired nonlinear multi-link model with link thrusters. In Section \ref{sec:meas}, the measurement function for an IMU is derived from the general equation governing the acceleration of a point in a rotating reference frame relative to an inertial frame. Section \ref{sec:obsv} provides a brief overview of Lie derivatives and nonlinear observability. The main contribution of this work is presented in Section \ref{sec:analysis}, where a nonlinear observability analysis is presented, and the conditions for observability are derived. The observability conditions and state tracking performance are then demonstrated for a simulated three-link system in Section \ref{sec:sim} using a Kalman filter. Finally, Section \ref{sec:conclusion} summarizes the findings and future work.

\section{MULTI-LINK SYSTEM MODEL} \label{sec:model}
\begin{table*}[t]
    \vspace{0.2cm}
    \centering
    \begin{tabular}{|c|l|c|c|} 
        \hline
        \textbf{Variable} & \textbf{Description} & \textbf{Vector} & \textbf{Matrix} \\ \hline
        \(N\) & Number of links & -- & -- \\ \hline
        \(l_i\) & Half-length of link $i$ & $\mathbf{l} \in \realpos{N}$ & $\lengthmat=\diag{\mathbf{l}} \in \diagpos{N}$ \\ \hline
        \(m_i\) & Mass of link $i$ & $\mass = \mtx{\frac{1}{m_1} & \ldots & \frac{1}{m_N}} \in \realpos{N}$ & $\massmat =\diag{\mass} \in \diagpos{N}$ \\ \hline 
        % $m_\Sigma$ & Total mass of multi-link system ($\sum_{i=1}^N m_i$)  & -- & -- \\ \hline
        \(j_i\) & Moment of inertia of link $i$ & $\inertia \in \realpos{N}$ & $\mathbf{J}=\diag{\inertia} \in \diagpos{N}$ \\ \hline
        \(\theta_i\) & CCW angle from the inertial \(x\)-axis to link \(i\) \(x\)-axis & \(\anglink \in \reals{N}\) & -- \\ \hline
        % $\sin \theta_i$ & Sine of link angle $i$ & \(\sin \anglink \in \reals{N}\) & $\stheta=\diag{\sin \boldsymbol{\theta}} \in \diagmat{N}$ \\ \hline
        % $\cos \theta_i$ & Cosine of link angle $i$ & \(\cos \anglink \in \reals{N}\) & $\ctheta=\diag{\cos \boldsymbol{\theta}} \in \diagmat{N}$ \\ \hline
        \(x_i, y_i\) & Inertial position of the CM of link \(i\) & \(\cmlinkx, \cmlinky \in \reals{N}\) & -- \\ \hline
        \(p_x,  p_y\) & Inertial position of the CM of the multi-link system & \(\cmrobot \in \reals{2}\) & --\\ \hline
        \(u_i\) & Thruster force on link $i$ & \(\control \in \reals{N}\) & --\\ \hline
        \(\psi_i\) & CCW angle from the link \(i\) \(x\)-axis to thrust vector \(u_i\) & \(\angthrust \in \reals{N}\) & --\\ \hline
        \(f_{x,i},  f_{y,i}\) & Net inertial frame $x$ and $y$ forces on link \(i\) & \(\forces_{x}, \forces_{y} \in \reals{N}\) & --\\ \hline
    \end{tabular}
    \caption{Variables and their associated vectors and matrices \cite{Liljeback2012-fs}.} \label{table:defs}
    \vspace{-0.5cm}
\end{table*}

The multi-link system model presented in this section is adapted from and inspired by the underwater multi-link snake robot model in \cite{Liljeback2012-fs, Kelasidi2014-fv}. Each ``link'' in our model represents an individual salp robot equipped with a jet thruster and an independent electronics and sensor suite, capable of forming chains of arbitrary length and configuration. Our multi-link model differs from the snake model in the following key aspects:
\begin{enumerate}
    \item Joints are unactuated
    \item Controlled thruster forces are applied at the CM of each link
    \item Mass, length, and moment of inertia are not assumed to be identical across links
\end{enumerate}
For brevity, we do not present the full derivation of the adapted model here; however, a detailed derivation of the snake model can be found in \cite{Liljeback2012-fs}.

The system state vector is defined as
%\begin{gather}
\begin{equation}
    \state = \begin{bmatrix} \anglink & \cmrobot & \dot{\anglink} & \dot{\cmrobot} \end{bmatrix} \in \reals{2N+4}
%    \mtx{\anglink & \cmrobot & \dot{\anglink} & \dot{\cmrobot}} \in \reals{2N+4},
    \end{equation}
%\end{gather}
where $\theta_i$ denotes the angle of the link \(i\) \(x\)-axis relative to the inertial \(x\)-axis, with counterclockwise (CCW) considered positive. The vector \(\cmrobot\) represents the \(x\)-\(y\) position of the multi-link system’s CM. We define the following sets of \(N \times N\) matrices: symmetric matrices \(\sym{N}\), symmetric positive definite matrices \(\pdef{N}\), skew-symmetric matrices \(\skewsym{N}\), diagonal matrices \(\diagmat{N}\), and diagonal positive definite matrices \(\diagpos{N}\). Table \ref{table:defs} summarizes all relevant variables, including their scalar, vector, and matrix representations. Corresponding free-body diagrams for the multi-link system and a single link are shown in Fig. \ref{fig:multilink} and \ref{fig:link}. The axes of each link's body frame are also referred to as the tangent ($x_{\text{link}, i}$) and normal ($y_{\text{link}, i}$) directions, with the frame's origin located at the link's CM.
\begin{figure}
    \centering
    \includegraphics[width=\linewidth]{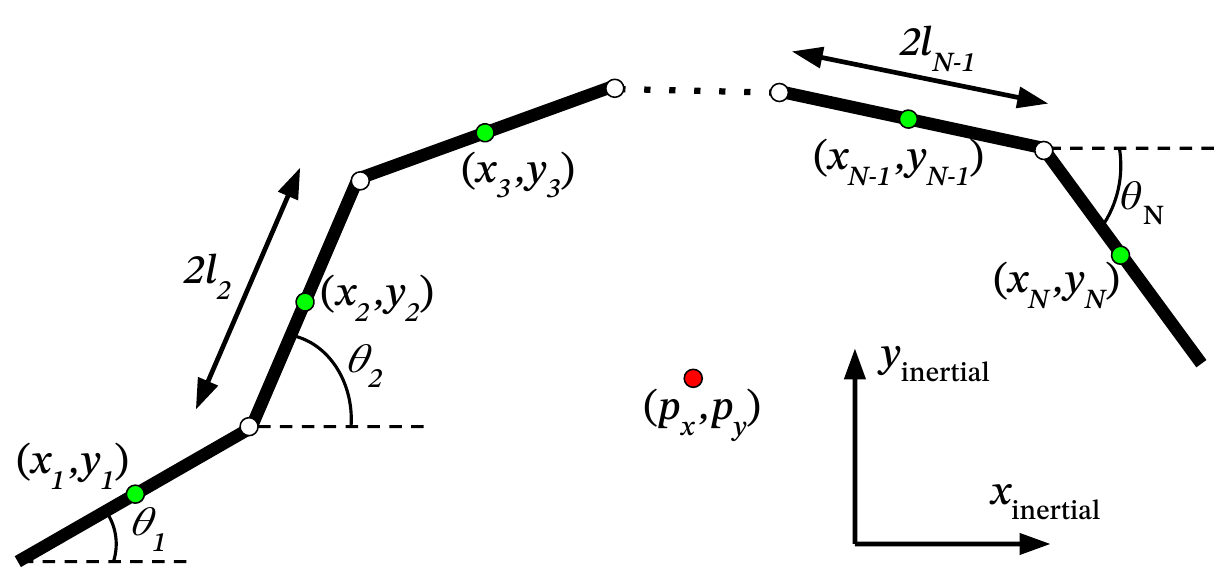} 
    \caption{Multi-link free body diagram.} \label{fig:multilink}
    \vspace{-0.5cm}
\end{figure}
\begin{figure}
    \centering
    \includegraphics[width=0.45\linewidth]{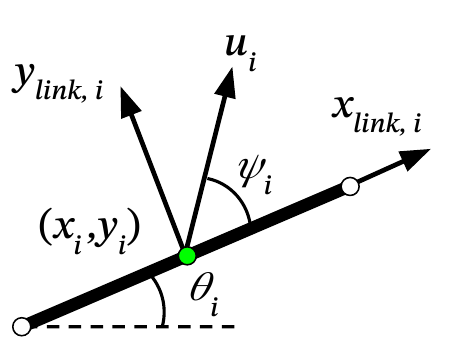} 
    \caption{Single link free body diagram.} \label{fig:link}
    \vspace{-0.5cm}
\end{figure}

The following matrices, frequently used in the derivation of the system model, are defined for conciseness:
\begin{align}
    \mathbf{A} &= \mtx{
    1 & 1 & 0 & 0 & 0 \\
    0 & \cdot & \cdot & 0 & 0 \\
    0 & 0 & \cdot & \cdot & 0 \\
    0 & 0 & 0 & 1 & 1} \in \reals{(N-1) \times N} \\
    % \end{align}
    % \begin{align}
    \mathbf{D} &= \mtx{1 & -1 & 0 & 0 & 0 \\
    0 & \cdot & \cdot & 0 & 0 \\
    0 & 0 & \cdot & \cdot & 0 \\
    0 & 0 & 0 & 1 & -1} \in \reals{(N-1) \times N} \\
    % \mathbf{A}_{i,j} &=
    % \begin{cases}
    % 1, & j = i \text{ or } j = i+1, \\
    % 0, & \text{otherwise},
    % \end{cases}
    % \quad i=1,\dots,N-1,\;\; j=1,\dots,N,
    % \;\; \mathbf{A} \in \mathbb{R}^{(N-1)\times N}, \\[1em]
    % \mathbf{D}_{i,j} &=
    % \begin{cases}
    % 1, & j = i, \\
    % -1, & j = i+1, \\
    % 0, & \text{otherwise},
    % \end{cases}
    % \quad i=1,\dots,N-1,\;\; j=1,\dots,N,
    % \;\; \mathbf{D} \in \mathbb{R}^{(N-1)\times N}. \\
    \mathbf{E} &= \left[\begin{array}{cc}
    \ones{N} & \mathbf{0}_{N \times 1} \\
    \mathbf{0}_{N \times 1} & \ones{N}
    \end{array}\right] \in \reals{2 N \times 2} \\
% \end{align}
% \begin{align}
    \mathbf{V} &= \tpose{\mathbf{A}} \inv{\left(\mathbf{D} {\massmat} \tpose{\mathbf{D}}\right)} \mathbf{A} \in \sym{N} \\
    \kmat &= \tpose{\mathbf{A}} \inv{\left(\mathbf{D} {\massmat} \tpose{\mathbf{D}} \right)} \mathbf{D} \in \reals{N \times N} \\
    \mtheta &= \mathbf{J} + \lengthmat \left( \stheta \mathbf{V} \stheta + \ctheta \mathbf{V} \ctheta \right) \lengthmat \in \pdef{N} \\
    \wmat &= \lengthmat \left(\stheta \mathbf{V C}_\theta - \ctheta \mathbf{V S}_\theta \right) \lengthmat \in \skewsym{N}.
\end{align}
Here, $\ones{N}$ is an $N$-dimensional vector of ones, $\ctheta=\diag{\cos \boldsymbol{\theta}} \in \diagmat{N}$, $\stheta=\diag{\sin \boldsymbol{\theta}} \in \diagmat{N}$, and $\diag{\cdot}$ returns a square diagonal matrix with the elements of the input vector on the diagonal.

The nonlinear dynamics of the multi-link system are:
\begin{gather}
    \dot{\state} = \mtx{\dot{\anglink} \\ \dot{\cmrobot} \\
    \inv{\mtheta} \left( -\wmat \dot{\anglink}^2 + \lengthmat \stheta \kmat {\massmat} \forces_{x} - \lengthmat \ctheta \kmat {\massmat} \forces_{y} \right) \\
    \frac{1}{m_\Sigma} \tpose{\mathbf{E}} \forces}, \label{eq:multilink}
\end{gather}
where $\forces = \mtx{\forces_{x} & \forces_{y}} \in \reals{2N}$, $\dot{\boldsymbol{\theta}}^2 = \mtx{\dot{\theta}_1^2 & \ldots & \dot{\theta}_N^2}$ and $m_\Sigma = \sum_{i=1}^N m_i$ is the total mass of the multi-link system. The net forces consist of external forces and thruster forces:
\begin{gather}
    \forces_{x} = \forces_{ext, x} + \cmatadd \control, \quad
    \forces_{y} = \forces_{ext, y} + \smatadd \control,
\end{gather}
where $\cmatadd = \diag{\cos(\bm{\theta + \angthrust})} \in \diagmat{N}$ and $\smatadd = \diag{\sin(\bm{\theta + \angthrust})} \in \diagmat{N}$. In this work, we make no specific assumptions about the external forces $\forces_{ext, x}$ and $\forces_{ext, y}$, other than that they are smooth and bounded. These forces may include hydrodynamic, gravitational, and frictional forces, depending on the application.

The CM of the multi-link system is calculated by
\begin{gather}
    \mathbf{p} = \mtx{p_x \\ p_y} = \frac{1}{m_\Sigma} \mtx{\inv{\mass} \cdot \cmlinkx \\ 
    \inv{\mass} \cdot \cmlinky},
\end{gather}
where $\inv{\mass} = \mtx{m_1 & \dots & m_N}$. The link positions and velocities can be reconstructed from the multi-link CM and link angles as follows:
\begin{gather}
    \mathbf{T} = \mtx{\mathbf{D} \\ \frac{1}{m_\Sigma} {\inv{\mass}}} \in \reals{N \times N} \\
    % \inv{\mathbf{T}} = \mtx{\left( \eye{N} - \frac{1}{m_\Sigma} \ones{N} \invt{\mass} \right) \tpose{\mathbf{D}} \inv{\left(\mathbf{D} \tpose{\mathbf{D}}\right)} & \ones{N}} \\
    \cmlinkx = \inv{\mathbf{T}} \mtx{- \mathbf{A} \lengthmat \cos(\anglink) \\
    p_x}, \ \dot{\cmlinkx} = \inv{\mathbf{T}} \mtx{ \mathbf{A} \lengthmat \stheta \dot{\anglink}\\
    \dot{p_x}} \\
    \cmlinky = \inv{\mathbf{T}} \mtx{- \mathbf{A} \lengthmat \sin(\anglink) \\
    p_y}, \ \dot{\cmlinky} = \inv{\mathbf{T}} \mtx{- \mathbf{A} \lengthmat \ctheta \dot{\anglink} \\
    \dot{p_y}}.
\end{gather}

\section{IMU MEASUREMENT MODEL} \label{sec:meas}
In this section, we derive the measurement function for an IMU mounted on link $i$ of the nonlinear system~\eqref{eq:multilink}. The IMU, consisting of an accelerometer and a gyroscope, provides measurements of the local linear acceleration and angular velocity relative to the inertial frame. 
The resulting measurement function is expressed in terms of the link to which the IMU is attached and its position relative to the corresponding link frame.

\subsection{Relative Motion}
The total acceleration of a point in a rotating frame, as observed from an inertial frame, is given by  
\begin{equation}
    \begin{aligned} \label{eq:relaccel}
        \mathbf{a}_{\text{inertial}} &= \mathbf{a}_{\text{origin}} + \mathbf{a}_{\text{relative}} + 2 \boldsymbol{\omega} \times \mathbf{v} \\
        &\quad + \boldsymbol{\omega} \times (\boldsymbol{\omega} \times \mathbf{r}) + \bm{\alpha} \times \mathbf{r}.
    \end{aligned}
\end{equation}
The variables are all in $\reals{3}$ and defined as:
\begin{itemize}
    \item \( \mathbf{a}_{\text{inertial}} \): Total acceleration of the point in the inertial frame  
    \item \( \mathbf{a}_{\text{origin}} \): Acceleration of the rotating frame's origin relative to the inertial frame  
    \item \( \mathbf{a}_{\text{relative}} \): Acceleration of the point relative to the rotating frame  
    \item \( \mathbf{v} \): Velocity of the point relative to the rotating frame  
    \item \( \mathbf{r} \): Position vector of the point relative to the rotating frame's origin  
    \item \( \boldsymbol{\omega} \): Angular velocity of the rotating frame relative to the inertial frame  
    \item \( \bm{\alpha} \): Angular acceleration of the rotating frame relative to the inertial frame
\end{itemize}
Equation \eqref{eq:relaccel} will be used as the starting point for deriving the IMU measurement equation in the next subsection.

\subsection{IMU Measurement}
The measurement function for an IMU in 3D is
\begin{gather}
    \meas = h(\state) =
    \begin{bmatrix}
    \mathbf{a}_{\text{inertial}} &
    \bm{\omega}
    \end{bmatrix} \in \reals{6}.
\end{gather}
We will simplify this equation by first assuming that an IMU placed on link $i$ is fixed relative to the link frame, i.e., \mbox{$\mathbf{r} = \text{constant}$,} $\mathbf{v} = 0$, and $\mathbf{a}_{\text{relative}} = 0$. This reduces \eqref{eq:relaccel} to 
\begin{gather}
    \mathbf{a}_{\text{inertial}} = \mathbf{a}_{\text{origin}} + \boldsymbol{\omega} \times (\boldsymbol{\omega} \times \mathbf{r}) + \bm{\alpha} \times \mathbf{r}.
\end{gather}

Next, since the multi-link system \eqref{eq:multilink} is planar, the cross-product terms can be simplified. In 2D, \( \boldsymbol{\omega} \) and $\bm{\alpha}$ are perpendicular to the plane (along the \( z \)-axis) and $\mathbf{r} \in \reals{2}$, so we have:
\begin{gather}
    \mathbf{a}_{\text{inertial}} = \mathbf{a}_{\text{origin}} - \omega^2 \mathbf{r} + \alpha \mtx{0 & -1 \\ 1 & 0} \mathbf{r},
\end{gather}
where $\omega = \norm{\bm{\omega}}$ and $\alpha = \norm{\bm{\alpha}}$.

The net forces applied at the CM of link $i$ in inertial frame coordinates are $\mathbf{f}_i = \mtx{f_{x,i} & f_{y,i}} \in \reals{2}$. The acceleration at the CM of the link is proportional to the net forces applied at the CM divided by the link mass: $\mathbf{a}_{\text{origin}} = \frac{{\mathbf{f}}_i}{m_i}$. Lastly, substituting in the appropriate terms, the measurement function for an IMU on link $i$ is
\begin{align} \label{eq:meas}
    \meas_{i} &= h_i(\state) = \mtx{ \frac{{\mathbf{f}}_i}{m_i} - \dot{\theta}_i^2 \mathbf{r} + \ddot{\theta}_i \mtx{0 & -1 \\ 1 & 0} \mathbf{r} \\
    \dot{\theta}_i} \in \reals{3}.
\end{align}

\section{NONLINEAR OBSERVABILITY} \label{sec:obsv}
\label{sec:nlobsv}
For linear systems, observability has a single well-defined meaning. However, in nonlinear systems, observability can exist in varying degrees, requiring a precise definition of the specific type being considered. Below, we provide a brief review of nonlinear observability and the Lie algebraic approach for determining the observability of nonlinear systems, summarized from \cite{Hermann1977-rt, Nijmeijer1990, Mirzaei2008-kl}.

Consider the nonlinear system, \(\Sigma\), with the following process and measurement models:
\begin{gather} \label{eq:nl_system}
    \Sigma: \quad \dot{\state} = f(\state, \control), \quad
    \meas = h(\state),
\end{gather}
where \( \state(t) \in \mathbb{R}^{n} \), \( \meas(t) \in \mathbb{R}^{k} \), and \( \control(t) \in \mathcal{U} \subseteq \mathbb{R}^{m} \), with \(\mathcal{U}\) the set of admissible controls. Let \( \state(t, \state_0, \control) \) denote the solution to the initial value problem for \(\Sigma\) with initial condition \( \state(0) = \state_0 \) under the control input \( \control(t) \), and define \( \meas(t, \state_0, \control) = h(\state(t, \state_0, \control)) \).

% Two points \( \state_0, \state_1 \in \mathbb{R}^{n} \) are \emph{indistinguishable} if, for every control \( \control \in \mathcal{U} \), their corresponding outputs are identical, i.e.,  
% \[
% \meas(t, \state_0, \control) = \meas(t, \state_1, \control) \ \forall \ t \geq 0.
% \]
% The system \(\Sigma\) is \emph{observable at \(\state_0\)} if indistinguishability implies \( \state_0 = \state_1 \). The system is \emph{observable} if this condition holds for all \( \state \). 

\emph{Local observability} is defined as follows: \( \state_0 \) and \( \state_1 \) are \emph{\( V \)-indistinguishable} if, for every control \( \control \in \mathcal{U} \), the corresponding state trajectories \( \state(t, \state_0, \control) \) and \( \state(t, \state_1, \control) \) remain in \( V \subseteq \mathbb{R}^n \) over \( t \in [0, T] \) and satisfy  
\begin{gather}
    \meas(t, \state_0, \control) = \meas(t, \state_1, \control) \ \forall \ t \in [0, T].
\end{gather}

The system \(\Sigma\) is \emph{locally observable at \( \state_0 \)} if there exists a neighborhood \( W \) of \( \state_0 \) such that, for every neighborhood \( V \subset W \), indistinguishability within \( V \) implies \( \state_0 = \state_1 \). If \(\Sigma\) is locally observable at all \( \state \), then it is \emph{locally observable}. Intuitively, this definition means that \( \state_0 \) can be distinguished from nearby states within finite time and with trajectories remaining close to \( \state_0 \). 

A differential geometric approach to testing observability involves analyzing the \emph{Lie derivatives} of the output function \( h(\state) \) with respect to the process model \( f(\state) \). The zeroth through second-order Lie derivatives are given by:
\begin{align}
    \lie{h}{f}{0} &= h(\state), \\
    \lie{h}{f}{1} &= \nabla h(\state) \cdot f(\state), \\
    \lie{h}{f}{2} &= \nabla (\lie{h}{f}{1}) \cdot f(\state) = \lie{\left( \lie{h}{f}{1} \right)}{f}{1}.
\end{align}
If \( h(\state) \) is a scalar function, \( \nabla h(\state) \) is its gradient (a row vector). If \( h(\state) \) is a vector function, \( \nabla h(\state) \) is the Jacobian matrix. Higher-order Lie derivatives follow a similar recursive form.  

If the process model is \emph{control-affine}, meaning it decomposes as: $\dot{\state} = f_0(\state) + \sum_{i=1}^{m} f_i(\state) \ u_i$, where \( f_0 \) represents the drift term, then Lie derivatives can be computed separately with respect to the drift and control vector fields. For instance, a second-order mixed Lie derivative of \( h(\state) \), first along \( f_0 \) and then along \( f_1 \), is:
\begin{equation}
    \mathcal{L}_{f_1 f_0}^2 h(\state) = \mathcal{L}_{f_1}^1 (\mathcal{L}_{f_0}^1 h(\state)) = \nabla (\mathcal{L}_{f_0}^1 h(\state)) \cdot f_1(\state).
\end{equation}
Taking a Lie derivative with respect to \( f_i(\state) \) implicitly assumes \( u_i \neq 0 \). 

The \emph{observability Lie algebra} $\obsvspace{}$, or observation space, for a control-affine system $\Sigma$ is
\begin{equation}
    \obsvspace{} = \text{span}\left\{ \mathcal{L}_{f_i}^c h(\state) \mid c \in \mathbb{N}_0, \; i = 0, \dots, m \right\}.
\end{equation}
Here, \( f_i \) can be any Lie derivative combination of the control-affine process model functions. Observability is then determined using the following rank condition on the \mbox{Jacobian of $\obsvspace{}$:}
\begin{theorem}
    The nonlinear system is locally observable if \textnormal{$\rank{\obsv{}} = n$} \cite{Hermann1977-rt}.
\end{theorem}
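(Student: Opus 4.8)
The plan is to prove the two implications that together make up the observability rank condition. Write $\mathcal{O}$ for the observation space and $d\mathcal{O}(\state_0) = \mathrm{span}\{d\phi(\state_0) : \phi \in \mathcal{O}\}$ for its codistribution at $\state_0$; the Jacobian appearing in the theorem is precisely a matrix whose rows span $d\mathcal{O}(\state_0)$, so the hypothesis is $\dim d\mathcal{O}(\state_0) = n$. The goal is local observability at $\state_0$; since $\state_0$ is arbitrary this yields local observability of $\Sigma$.

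\textbf{Step 1: indistinguishable points agree on $\mathcal{O}$.} First I would show that if $\state_0$ and $\state_1$ are $V$-indistinguishable for a small enough neighborhood $V$, then $\phi(\state_0) = \phi(\state_1)$ for every $\phi \in \mathcal{O}$. The tool is piecewise-constant control: fix constants $v^{(1)}, \dots, v^{(k)} \in \mathcal{U}$, apply $v^{(j)}$ on a time interval of length $t_j > 0$, and consider the terminal output
\[
\gamma(t_1, \dots, t_k) = h\big(\state(t_1 + \cdots + t_k, \state_0, \text{this input})\big).
\]
For $(t_1, \dots, t_k)$ small the trajectory stays in $V$, and an induction on $k$ — each differentiation in $t_j$ pulls out a Lie derivative along $g_j := f_0 + \sum_i v^{(j)}_i f_i$ acting on what follows — identifies the mixed partial $\partial^k \gamma / \partial t_1 \cdots \partial t_k$ at the origin with the iterated Lie derivative $\mathcal{L}_{g_1} \mathcal{L}_{g_2} \cdots \mathcal{L}_{g_k} h(\state_0)$. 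Indistinguishability forces $\gamma$ to be the same function of $(t_1, \dots, t_k)$ whether started from $\state_0$ or $\state_1$, so all these iterated Lie derivatives agree at the two points. Since $g_j$ depends affinely on the constants $v^{(j)}$, the quantity $\mathcal{L}_{g_1} \cdots \mathcal{L}_{g_k} h$ is a polynomial in the components $v^{(j)}_i$ whose coefficients are the ``pure'' iterated Lie derivatives along sequences of $f_0, f_1, \dots, f_m$; choosing enough distinct constants and inverting a Vandermonde-type system recovers each such pure derivative, i.e. every element of a spanning set of $\mathcal{O}$. Hence $\phi(\state_0) = \phi(\state_1)$ for all $\phi \in \mathcal{O}$.

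\textbf{Step 2: the rank condition separates nearby points.} By hypothesis there are $\phi_1, \dots, \phi_n \in \mathcal{O}$ with $d\phi_1(\state_0), \dots, d\phi_n(\state_0)$ linearly independent. Put $\Phi = (\phi_1, \dots, \phi_n) : \mathbb{R}^n \to \mathbb{R}^n$; then $D\Phi(\state_0)$ is nonsingular, so by the inverse function theorem $\Phi$ restricts to a diffeomorphism of some open neighborhood $W \ni \state_0$ onto its image, and in particular is injective on $W$. Shrink $W$ to a neighborhood $V$ small enough that Step 1 applies — legitimate because only arbitrarily short control horizons are needed and the flow depends continuously on the initial condition. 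If $\state_1 \in V$ is $V$-indistinguishable from $\state_0$, then Step 1 gives $\phi_\ell(\state_0) = \phi_\ell(\state_1)$ for all $\ell$, i.e. $\Phi(\state_0) = \Phi(\state_1)$, and injectivity of $\Phi$ on $W$ forces $\state_0 = \state_1$. This is exactly the definition of local observability at $\state_0$.

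\textbf{Main obstacle.} The delicate part is Step 1 — in particular, the claim that \emph{all} iterated (including mixed) Lie derivatives of $h$ along the control-affine data are recoverable from output measurements. This needs the composition-of-flows construction, a careful induction identifying the $k$-fold mixed time-derivative of $\gamma$ at the origin with $\mathcal{L}_{g_1} \cdots \mathcal{L}_{g_k} h$, and the algebraic step that separates the individual vector-field contributions so one recovers a spanning set of $\mathcal{O}$ rather than only derivatives along a single composite field. The bookkeeping that keeps trajectories inside $V$ is handled by restricting to short switching times. For real-analytic $f$ and $h$ one may instead argue directly from the convergent Taylor expansion of the output in the switching times, which streamlines the argument without changing its essence.
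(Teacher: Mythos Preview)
The paper does not give its own proof of this theorem; it is stated as a classical result and attributed via citation to Hermann and Krener (1977), with Nijmeijer and van der Schaft also cited as background. So there is no in-paper argument to compare against.

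That said, your outline is essentially the standard Hermann--Krener proof and is correct in structure: piecewise-constant inputs and successive time-differentiation at zero switching times recover the iterated Lie derivatives $\mathcal{L}_{g_1}\cdots\mathcal{L}_{g_k}h$ at the initial point; affinity of each $g_j$ in the control constants lets one isolate the pure derivatives along $f_0,\dots,f_m$ and hence a spanning set of $\mathcal{O}$; and the full-rank hypothesis plus the inverse function theorem then gives a neighborhood on which $\Phi=(\phi_1,\dots,\phi_n)$ is injective, forcing $V$-indistinguishable points to coincide. Your identification of Step~1 as the place requiring care (keeping trajectories in $V$ via short switching times, and the induction matching mixed partials to iterated Lie derivatives) is accurate. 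One minor wording point: in Step~2 you should check the quantifier order in the paper's definition --- it asks that \emph{for every} neighborhood $V\subset W$ the implication holds, not just for one sufficiently small $V$ --- but this follows immediately since $\Phi$ is injective on all of $W$ and Step~1 applies to any $V$.
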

There is no universal method for constructing \( \obsvspace{} \), but in practice, sequentially computing Lie derivatives along well-chosen combinations of the process model functions often yield good results. If any set of Lie derivatives satisfies the rank condition, then the system is locally observable.

\section{OBSERVABILITY ANALYSIS} \label{sec:analysis}
Our goal is to establish the observability conditions for the state of the multi-link system \eqref{eq:multilink} along with a set of inertial parameters. For this analysis, we make the following assumptions:
\begin{enumerate}
    \item The length of each link ($l_i$) is known
    \item The mass ($m_i$) and inertia ($j_i$) of each link are treated as independent variables
    % \item Thruster angles ($\psi_i$) are fixed relative to the link frame
    \item Non-zero thruster inputs ($u_i \neq 0$)
    \item One IMU placed at the CM of each link
\end{enumerate}

The inertial parameters with which this analysis is concerned are the mass and inertia of each link. In practice, dead reckoning enables recovery of the multi-link system’s position and velocity from an initial estimate by integrating IMU measurements over time. Consequently, these quantities are omitted from the state in the observability analysis. The augmented state vector and process model are:
\begin{gather}
    \stateip = \mtx{\anglink & \dot{\anglink} & \mass & \inertia} \in \reals{4N} 
    \label{eq:aug_x} \\
    \dstateip = \mtx{\dot{\anglink} \\
    \inv{\mtheta} \left( -\wmat \dot{\anglink}^2 + \lengthmat \stheta \kmat {\massmat} \forces_{x} - \lengthmat \ctheta \kmat {\massmat} \forces_{y} \right) \\
    \zeros{2N}} \label{eq:aug_f}.
\end{gather}
The dynamics can be written in a control-affine form with the control input function for thruster $u_i$ as
\begin{gather}
    % \dot{\state} = f_0(\state) + \sum_{i=1}^N f_{i}(\state) {u}_{i} \\
    % f_0(\state) = \mtx{\dot{\anglink} \\
    % \inv{\mtheta} \left( -\wmat \dot{\anglink}^2 + \lengthmat \stheta \kmat {\massmat} \forces_{ext,x} - \lengthmat \ctheta \kmat {\massmat} \forces_{ext,y} \right) \\
    % \zeros{N} \\
    % \zeros{N}} \\
    f_i(\stateip) = \mtx{\zeros{N} \\
    \inv{\mtheta} \lengthmat \left( \stheta \kmat \cmatadd - \ctheta \kmat \smatadd \right) \massmat \basis{i} \\
    \zeros{2N}}.
\end{gather}
% \vspace{-0.1cm}
% By placing the IMU at the CM/origin of the link frame ($\mathbf{r} = \mathbf{0}$), the measurement equation \eqref{eq:meas} for an IMU on link $i$ is simplified to
% \begin{gather} \label{eq:aug_h}
%     h_i(\state) = \mtx{ \frac{f_{x,i}}{m_i} &
%     \frac{f_{y,i}}{m_i} &
%     \dot{\theta}_i} \in \reals{3}.
% \end{gather}
% Adding an IMU at the CM of each link, concatenating and rearranging the rows, then the measurement vector is
% \begin{gather}
%     h(\state) = \mtx{{\massmat} \forces_x \\ 
%     {\massmat} \forces_y \\
%     \dot{\anglink}} = \mtx{\massmat (\forces_{ext, x} + \cmatadd \control) \\ \massmat (\forces_{ext, y} + \smatadd \control) \\ \dot{\anglink}} \in \reals{3N}.
% \end{gather}
By placing each IMU at the origin of its link frame ($\mathbf{r} = \mathbf{0}$), the measurement equation \eqref{eq:meas} for link $i$ simplifies to
\begin{gather} \label{eq:aug_h}
    h_i(\state) = \mtx{ \frac{f_{x,i}}{m_i} &
    \frac{f_{y,i}}{m_i} &
    \dot{\theta}_i} \in \reals{3}.
\end{gather}
With an IMU at the CM of every link, the individual measurements can be concatenated and rearranged to form the overall measurement vector
\begin{gather}
    h(\state) = \mtx{{\massmat} \forces_x \\ 
    {\massmat} \forces_y \\
    \dot{\anglink}} = \mtx{\massmat (\forces_{ext, x} + \cmatadd \control) \\ \massmat (\forces_{ext, y} + \smatadd \control) \\ \dot{\anglink}} \in \reals{3N}.
\end{gather}

\subsection{State Transformation}
We will first apply a transformation to the state to simplify the Lie derivatives. Let the transformed state be \mbox{$\stateip' = \mathcal{T}(\stateip) \in \reals{4N}$} where
\vspace{-0.2cm}
\begin{align}
    \stateip' &= \mtx{\anglink & \tform & \mass & \inertia}
\end{align}
and $\tform = \inv{\mtheta} \inv{\inertiamat} \dot{\anglink}$. The transformation $\mathcal{T}(\stateip)$ is a diffeomorphism and if $\stateip'$ is observable, then $\stateip$ is observable. The measurement is rewritten in terms of $\stateip'$ by substituting the transformed state variable $\dot{\anglink} = \inertiamat \mtheta \tform$:
\begin{gather}
    h(\stateip') = \mtx{\massmat (\forces_{ext, x} + \cmatadd \control) \\ \massmat (\forces_{ext, y} + \smatadd \control) \\ \inertiamat \mtheta \tform} \in \reals{3N}.
\end{gather}
The Lie derivatives are taken with respect to the transformed state $\stateip'$ in the remainder of the analysis.

\subsection{Zeroth-Order Lie Derivative}
The zeroth-order Lie derivative is the Jacobian of the measurement function. The $*$ denotes non-zero blocks that are conservatively assumed to be zero, and thus contribute no rank and observability information, simplifying analysis.
\begin{align}
    \lie{h}{}{0} &= h(\stateip') \\
    \grad \lie{h}{}{0} &= \mtx{\mathbf{\Delta}_1 & \zeros{2N \cross N} & \mathbf{\Delta}_2 & \zeros{2N \cross N} \\
    * & \inertiamat \mtheta & * & *}  \\
    \mathbf{\Delta}_1 &= \mtx{-\massmat \smatadd \diag{\control} \\ \massmat \cmatadd \diag{\control}} \in \reals{2N \times N} \\
    \mathbf{\Delta}_2 &= \mtx{\diag{\forces_{ext,x} + \cmatadd \control} \\ \diag{\forces_{ext,y} + \smatadd \control}} \in \reals{2N \times N}.
\end{align}

\subsection{First-Order Lie Derivatives}
First-order Lie derivatives are taken with respect to an arbitrary control vector field $f_i$.
\vspace{-0.15cm}
\begin{align}
    \lie{h}{f_i}{1} &= \nabla h(\stateip') \cdot f_i(\stateip') \\
    &= \mtx{\zeros{2N} \\
    \inertiamat \lengthmat \left( \stheta \kmat \cmatadd - \ctheta \kmat \smatadd \right) \massmat \basis{i}} \\
    \grad \mathcal{L}_{f_{i}}^1 h &=
    \mtx{\zeros{2N \times N} & \zeros{2N \times N} & \zeros{2N \times N} & \zeros{2N \times N} \\
    * & \zeros{N \times N} & * & \mathbf{\Delta}_3} \\
    \mathbf{\Delta}_3 &= \diag{ \lengthmat \left( \stheta \kmat \cmatadd - \ctheta \kmat \smatadd \right) \massmat \basis{i}}.
\end{align}

\subsection{Observability Space}
By including all of the control vector fields in $\obsv{}$, we implicitly assume that $u_i \neq 0$. The first-order Lie derivative with respect to the drift vector field $f_0$ is excluded from the analysis because the derivatives became analytically intractable. Including $f_0$ terms may lead to less conservative conditions, i.e., requiring less control, for observability than those derived in this analysis.

Constructing $\obsv{}$ from the zeroth and first-order control Lie derivatives yields:
\begin{gather}
    \obsv{} = \mtx{\grad \lie{h}{}{0} \\ 
    \grad \lie{h}{f_{1:N}}{1}} \in \reals{(3N + 3N^2) \times 4N}.
\end{gather}
Compiling the blocks, removing zero block rows for simplicity, and swapping the first and third columns yields the diagonal block matrix $\obsv{}$ in \eqref{eq:obsv} at the top of the next page.
\begin{table*}[ht]
    \centering
    \begin{align} \label{eq:obsv}
        \obsv{} &= \mtx{-\massmat \smatadd \diag{\control} &  \diag{\forces_{ext,x} + \cmatadd \control} & \zeros{N \cross N} & \zeros{N \cross N} \\
        \massmat \cmatadd \diag{\control} & \diag{\forces_{ext,y} + \smatadd \control} & \zeros{N \cross N} & \zeros{N \cross N} \\
        \zeros{N \cross N} & \zeros{N \cross N} & \inertiamat \mtheta & \zeros{N \cross N} \\
        \zeros{N \cross N} & \zeros{N \cross N} & \zeros{N \cross N} & \diag{ \lengthmat \left( \stheta \kmat \cmatadd - \ctheta \kmat \smatadd \right) \massmat \basis{1}} \\
        \vdots & \vdots & \vdots & \vdots \\
        \zeros{N \cross N} & \zeros{N \cross N} & \zeros{N \cross N} & \diag{ \lengthmat \left( \stheta \kmat \cmatadd - \ctheta \kmat \smatadd \right) \massmat \basis{N}} }  \in \reals{(3N + N^2) \times 4N} 
    \end{align}
    \vspace{-0.8cm}
\end{table*}

The matrix in \eqref{eq:obsv} can be expressed in the compact block diagonal form as
\begin{gather}
    \obsv{} = \mtx{\mathbf{\Omega}_1 & \zeros{2N \times N} & \zeros{2N \times N} \\
    \zeros{N \times 2N} & \inertiamat \mtheta & \zeros{N \times N} \\
    \zeros{N^2 \times 2N} & \zeros{N^2 \times N} & \mathbf{\Omega}_2}.
\end{gather}
If each diagonal block of $\obsv{}$ is full rank, then the entire matrix is full rank \cite{Silvester2000-ww}. The block $\inertiamat \mtheta$ is trivially full rank because it is the product of positive definite matrices, so the task becomes proving that $\mathbf{\Omega}_1 \in \reals{2N \times 2N}$ and $\mathbf{\Omega}_2 \in \reals{N^2 \times N}$ are full rank. The remainder of this subsection focuses on deriving conditions for $\mathbf{\Omega}_1$ and $\mathbf{\Omega}_2$ to be full rank so that $\obsv{}$ is subsequently full rank.

\subsubsection{$\mathbf{\Omega}_1$ Analysis}
To prove the rank of $\mathbf{\Omega}_1$, we will utilize the following theorem:
\begin{theorem} \label{thm:schur}
Let  
\begin{gather}
    \mathbf{M} = \mtx{ \mathbf{A} & \mathbf{B} \\ \mathbf{C} & \mathbf{D} }
\end{gather}
be a block matrix where \( \mathbf{A}, \mathbf{B}, \mathbf{C}, \mathbf{D} \in \reals{N \times N}\) and \( \mathbf{C} \mathbf{D} = \mathbf{D} \mathbf{C} \). Then, the determinant of \( \mathbf{M} \) satisfies:  
% \[
% \det (\mathbf{M}) = \det (\mathbf{A} \mathbf{D} - \mathbf{B} \mathbf{C}) \ \cite{Silvester2000-ww}.
% \] 
\begin{gather}
    \det (\mathbf{M}) = \det (\mathbf{A} \mathbf{D} - \mathbf{B} \mathbf{C}) \ \cite{Silvester2000-ww}.
\end{gather}
\end{theorem}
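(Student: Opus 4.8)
The plan is to first prove the identity under the extra hypothesis that $\mathbf{D}$ is invertible, and then remove that hypothesis by a continuity/polynomial argument. For the invertible case, the key step is the block factorization
\begin{gather}
    \mtx{\mathbf{A} & \mathbf{B} \\ \mathbf{C} & \mathbf{D}}\mtx{\mathbf{D} & \zeros{N \times N} \\ -\mathbf{C} & \mathbf{I}_N} = \mtx{\mathbf{A}\mathbf{D} - \mathbf{B}\mathbf{C} & \mathbf{B} \\ \mathbf{C}\mathbf{D} - \mathbf{D}\mathbf{C} & \mathbf{D}},
\end{gather}
where $\mathbf{I}_N$ denotes the $N \times N$ identity. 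The hypothesis $\mathbf{C}\mathbf{D} = \mathbf{D}\mathbf{C}$ is exactly what makes the lower-left block of the right-hand side vanish, so that matrix is block upper triangular. Taking determinants of both sides and using that the determinant of a block triangular matrix with square diagonal blocks equals the product of the determinants of those blocks, I get $\det(\mathbf{M})\det(\mathbf{D}) = \det(\mathbf{A}\mathbf{D} - \mathbf{B}\mathbf{C})\det(\mathbf{D})$. When $\det(\mathbf{D}) \neq 0$ I cancel the common factor to conclude $\det(\mathbf{M}) = \det(\mathbf{A}\mathbf{D} - \mathbf{B}\mathbf{C})$.

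For the general case I would perturb $\mathbf{D}$ to $\mathbf{D}_t = \mathbf{D} + t\,\mathbf{I}_N$ with $t$ a real parameter, and write $\mathbf{M}_t$ for $\mathbf{M}$ with $\mathbf{D}$ replaced by $\mathbf{D}_t$. Because $t\,\mathbf{I}_N$ commutes with every matrix, $\mathbf{C}\mathbf{D}_t = \mathbf{C}\mathbf{D} + t\,\mathbf{C} = \mathbf{D}\mathbf{C} + t\,\mathbf{C} = \mathbf{D}_t\mathbf{C}$, so the commutativity hypothesis survives the perturbation for every $t$. Also $\mathbf{D}_t$ is singular only when $-t$ is an eigenvalue of $\mathbf{D}$, hence for at most $N$ values of $t$, so the invertible case applies on a punctured neighborhood of $t = 0$ and gives $\det(\mathbf{M}_t) = \det(\mathbf{A}\mathbf{D}_t - \mathbf{B}\mathbf{C})$ there. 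Both sides are polynomials in $t$ (every entry is affine in $t$, and a determinant is a polynomial in the entries), and two polynomials agreeing at infinitely many points are identically equal, so the identity also holds at $t = 0$, which is the claim.

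There is no deep obstacle here; the only delicate point is that the quick argument divides by $\det(\mathbf{D})$ and so does not, on its own, cover singular $\mathbf{D}$. Supplying the limiting argument requires noticing that a generic small perturbation of $\mathbf{D}$ will not commute with $\mathbf{C}$, which is why the specific choice $\mathbf{D} + t\,\mathbf{I}_N$ --- invertible for all but finitely many $t$, yet always commuting with $\mathbf{C}$ --- is the right one. (Equivalently one could invoke density of the invertible matrices and continuity of $\det$, but exhibiting an approximating sequence that still commutes with $\mathbf{C}$ amounts to the same observation.) In the intended application, the sub-blocks $\mathbf{C}$ and $\mathbf{D}$ of $\mathbf{\Omega}_1$ are diagonal, so $\mathbf{C}\mathbf{D} = \mathbf{D}\mathbf{C}$ holds automatically and the theorem applies without further checking.
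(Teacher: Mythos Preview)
The paper does not actually prove this theorem; it merely states the result and attributes it to Silvester~\cite{Silvester2000-ww}, so there is no in-paper argument to compare against. Your proof is correct and is essentially Silvester's own: the block factorization that produces $\mathbf{C}\mathbf{D}-\mathbf{D}\mathbf{C}$ in the lower-left corner, cancellation of $\det(\mathbf{D})$ in the nonsingular case, and the perturbation $\mathbf{D}\mapsto\mathbf{D}+t\,\mathbf{I}_N$ (which preserves commutativity with $\mathbf{C}$) together with a polynomial-identity argument to handle singular $\mathbf{D}$. Your closing remark that in the application the sub-blocks of $\mathbf{\Omega}_1$ are diagonal, so the commutativity hypothesis is automatic, is exactly how the paper invokes the theorem.
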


Since the blocks of $\mathbf{\Omega}_1$ are all diagonal matrices and square, they commute with each other, and we can apply Theorem \ref{thm:schur} to compute the determinant of $\mathbf{\Omega}_1$.
% \begin{equation}
%     \begin{aligned}
%         & \det(\mathbf{\Omega}_1) = \det ( -\massmat \smatadd \diag{\control} ( \diag{\forces_{ext,y}} \\
%         & + \smatadd \diag{\control} ) - \massmat \cmatadd \diag{\control} ( \diag{\forces_{ext,x}} \\ 
%         & + \cmatadd \diag{\control} ) ).
%     \end{aligned}
% \end{equation}
Factoring out $-\massmat$ and applying the identity $\cmatadd^2 + \smatadd^2 = \eye{N}$, where $\eye{N}$ is the $N \times N$ identity matrix, yields
% \begin{equation}
%     \begin{gathered}
%         \det(\mathbf{\Omega}_1) = \det(-\massmat) \det ( \diag{\control^2} \\
%         + \cmatadd \diag{\control} \diag{\forces_{ext,x}} + \smatadd \diag{\control} \diag{\forces_{ext,y}} ).
%     \end{gathered}
% \end{equation}
\begin{equation}
\begin{split}
    \det(\mathbf{\Omega}_1) &= \det(-\massmat) \det\big( \diag{\control^2} \\
    &\quad + \cmatadd \diag{\control} \diag{\forces_{ext,x}} \\
    &\quad + \smatadd \diag{\control} \diag{\forces_{ext,y}} \big). \label{eq:diag_sum}
\end{split}
\end{equation}
The term $\det(-\massmat)$ is always non-zero because $\massmat \in \diagpos{N}$, so it can be ignored for rank analysis purposes. In the other determinant term, the determinant is acting on a summation of diagonal matrices. The determinant of a diagonal matrix is the product of its diagonal entries, so the problem can be reduced to finding the conditions for each diagonal element to be non-zero. 

The diagonal entries of the diagonal matrix sum are
\begin{gather}
    u_i(u_i + \cos(\theta_i + \psi_i) f_{ext,x,i} + \sin(\theta_i + \psi_i) f_{ext,y,i}).
\end{gather}
We are implicitly assuming $u_i \neq 0$ by including the control vector fields in $\obsv{}$, so the element-wise condition for a non-zero determinant, and therefore a full rank $\mathbf{\Omega}_1$, becomes
\begin{gather}
    % u_i \neq -( \cos(\theta_i + \psi_i) f_{ext,x,i} + \sin(\theta_i + \psi_i) f_{ext,y,i}) \\
    u_i + \cos(\theta_i + \psi_i) f_{ext,x,i} + \sin(\theta_i + \psi_i) f_{ext,y,i} \neq 0.
\end{gather}
This constraint implies that, in addition to requiring nonzero thruster forces on each link ($u_i \neq 0$), each link must also experience a nonzero net force.

\subsubsection{$\mathbf{\Omega}_2$ Analysis}
The block has the form of taking each column of $\lengthmat \left( \stheta \kmat \cmatadd - \ctheta \kmat \smatadd \right) \massmat$, diagonalizing it, and stacking the blocks. The block $\mathbf{\Omega}_2$ is full rank if $\lengthmat \left( \stheta \kmat \cmatadd - \ctheta \kmat \smatadd \right) \massmat$ does not have a row of zeros because the columns of $\mathbf{\Omega}_2$ will be linearly independent by construction.

Let $\fmat = \stheta \kmat \cmatadd - \ctheta \kmat \smatadd$. The diagonal structures of $\ctheta, \stheta, \cmatadd$, and $\smatadd$ allow the matrix products to be expressed in simple element terms where the subscript $ij$ represents the matrix element corresponding to the $i$-th row and $j$-th column. Applying the trigonometric identities
\begin{equation}
    \begin{aligned}\label{eq:trig}
        \sin(\alpha \pm \beta) &= \sin\alpha \cos\beta \pm \cos\alpha \sin\beta \\
        \cos(\alpha \pm \beta) &= \cos\alpha \cos\beta \mp \sin\alpha \sin\beta
    \end{aligned}
\end{equation}
and combining terms reduces the elements of $\fmat$ to
\begin{align}
    \fmat_{ij} = \kmat_{ij}  \sin((\theta_i - \theta_j) - \psi_j).
\end{align}

The matrix $\kmat$ has rank $N - 1$ and one can verify that $\ker(\kmat) =\{\mathbf{1}_N\}$ and \mbox{$\coker{\kmat} = \{\mtx{1 & -1 & 1 & -1 & \dots} \in \reals{N}\}$}.
From this information, we can conclude that $\kmat$ does not have a row or column of zeros because the basis vector $\basis{i}$ is not in either kernel subspace. We will use this information in the following theorem to prove that $\fmat$ does not have a row of zeros.

\begin{theorem}  
Let \(\kmat \in \reals{N \times N}\) be a matrix with \(\rank{\kmat} = N-1\) and no row or column composed entirely of zeros. Define the matrix \(\fmat\) with elements  
% \[
% \fmat_{ij} = \sin((\theta_i - \theta_j) - \psi_j) \, \kmat_{ij},
% \]
\begin{gather}
    \fmat_{ij} = \sin((\theta_i - \theta_j) - \psi_j) \, \kmat_{ij},
\end{gather}
where \(\sin(\psi_i) \neq 0 \ \forall \ i\). Then, \(\fmat\) cannot have a row consisting entirely of zeros.  
\end{theorem}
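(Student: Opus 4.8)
The plan is to argue by contradiction. Suppose that for some index $i$ the entire $i$-th row of $\fmat$ vanishes, so that $\kmat_{ij}\,\sin((\theta_i-\theta_j)-\psi_j)=0$ for every $j$, and aim for a contradiction with the hypotheses $\rank{\kmat}=N-1$ and ``$\kmat$ has no row or column of zeros.''

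First I would extract the diagonal entry: setting $j=i$ gives $\fmat_{ii}=\kmat_{ii}\sin(-\psi_i)=-\kmat_{ii}\sin(\psi_i)$, and since $\sin(\psi_i)\neq0$ the vanishing of row $i$ forces $\kmat_{ii}=0$. This is the only place where $\sin(\psi_i)\neq0$ is used, and it is immediate. Next I would package the remaining (off-diagonal) conditions using the factored form $\fmat=\stheta\,\kmat\,\cmatadd-\ctheta\,\kmat\,\smatadd$: the $i$-th row of $\fmat$ equals $\tpose{\basis{i}}\kmat\big(\sin\theta_i\,\cmatadd-\cos\theta_i\,\smatadd\big)$, and $\sin\theta_i\,\cmatadd-\cos\theta_i\,\smatadd=\diag{\sin((\theta_i-\theta_j)-\psi_j)}$, so a zero $i$-th row says that the $i$-th row of $\kmat$ is annihilated by this diagonal matrix. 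Since $\kmat$ has no zero row, the support $S_i=\{\,j:\kmat_{ij}\neq0\,\}$ is nonempty, and the condition then forces $\sin((\theta_i-\theta_j)-\psi_j)=0$, i.e.\ $\theta_i-\theta_j-\psi_j\in\pi\mathbb{Z}$, for every $j\in S_i$. The objective is to show these angular relations, together with $\kmat_{ii}=0$, are incompatible with $\kmat$ having rank $N-1$ and no zero column; for this I would bring in the explicit null-space information stated just before the theorem, $\ker(\kmat)=\text{span}\{\ones{N}\}$ and $\coker{\kmat}=\text{span}\{\mtx{1 & -1 & 1 & -1 & \dots}\}$, to pin down which $\kmat_{ij}$ are necessarily nonzero and thereby over-constrain the angles $\anglink$.

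I expect this last step to be the main obstacle. The diagonal computation alone only gives $\kmat_{ii}=0$, which is genuinely possible in general (the diagonal of the specific $\kmat$ appearing in \eqref{eq:obsv} does vanish for suitable link masses), so the substance of the theorem lies in handling indices $i$ with $\kmat_{ii}=0$ and ruling out the angle configurations in which the off-diagonal sines indexed by $S_i$ all vanish at once. That is where the rank-$(N-1)$ hypothesis must do real work; if a fully unconditional argument is unavailable, the natural fallback is to note that the offending set of link angles has measure zero, which still yields local observability at generic states. In either form, tracking the nonzero pattern of $\kmat$ through its kernel and cokernel is the delicate part of the argument.
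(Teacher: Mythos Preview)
Your approach differs from the paper's. The paper does not attempt the off-diagonal analysis at all; instead it claims that because $\kmat$ has rank $N-1$ and no zero row or column, there is a column permutation $\mathbf{P}$ with $\kmat'=\kmat\mathbf{P}$ having all diagonal entries nonzero, defines $\fmat'_{ij}=\sin((\theta_i-\theta_j)-\psi_j)\,\kmat'_{ij}$, observes $\fmat'_{ii}=-\sin(\psi_i)\,\kmat'_{ii}\neq 0$, and then asserts $\fmat=\fmat'\mathbf{P}^{-1}$ to conclude that $\fmat$ has no zero row. So the idea you are ``missing'' relative to the paper is this permutation device, which sidesteps the off-diagonal case entirely.

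Your caution is warranted, however, because the device does not actually work. The relation $\fmat=\fmat'\mathbf{P}^{-1}$ is false: permuting the columns of $\kmat$ does not permute the columns of $\fmat$, since the factor $\sin((\theta_i-\theta_j)-\psi_j)$ depends on the column index $j$ through $\theta_j$ and $\psi_j$, which are not being permuted. (The existence of such a $\mathbf{P}$ is also not guaranteed by the stated hypotheses alone, though that is a secondary issue.) In fact the theorem as stated fails for particular $\anglink$: for $N=3$ take $\kmat$ with rows $(0,1,-1)$, $(1,0,-1)$, $(1,-1,0)$, which has rank $2$, no zero row or column, and the same kernel and cokernel as the paper's $\kmat$; setting $\theta_1=0$, $\theta_2=-\psi_2$, $\theta_3=-\psi_3$ makes the first row of $\fmat$ vanish identically, regardless of the nonzero values of $\sin\psi_i$. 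So the obstacle you flagged is real, and your measure-zero fallback---that the offending angle configurations form a lower-dimensional set, so local observability holds at generic states---is the honest conclusion available from these hypotheses.
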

\begin{proof}
Since $\rank{\kmat} = N-1$ and $\kmat$ has no row or column of zeros, there exists a permutation matrix $\mathbf{P}$ that reorders the columns such that the permuted matrix $\kmat' = \kmat \mathbf{P}$ has all nonzero diagonal entries. Let $\fmat'$ be
% \[
% \fmat'_{ij} = \sin((\theta_i - \theta_j) - \psi_j) \, \kmat'_{ij}.
% \]
\begin{gather}
    \fmat'_{ij} = \sin((\theta_i - \theta_j) - \psi_j) \, \kmat'_{ij}.
\end{gather}

We first show that $\fmat'$ cannot have a row of all zeros. The diagonal elements for any row $i$ are
% \[
% \fmat'_{ii} = \sin((\theta_i - \theta_i) - \psi_i) \, \kmat'_{ii} = -\sin(\psi_i) \, \kmat'_{ii}.
% \]
\begin{gather}
    \fmat'_{ii} = \sin((\theta_i - \theta_i) - \psi_i) \, \kmat'_{ii} = -\sin(\psi_i) \, \kmat'_{ii}.
\end{gather}
Assuming $\sin(\psi_i) \neq 0$, and $\kmat'_{ii} \neq 0$ by construction, it follows that $\fmat'_{ii} \neq 0$ and therefore, no row of $\fmat'$ is entirely zero.

Since $\fmat = \fmat' \mathbf{P}^{-1}$ and $\mathbf{P}^{-1}$ is a permutation matrix, permuting the columns of $\fmat'$ does not introduce any zero rows. Therefore, $\fmat$ cannot have a row of all zeros.
\end{proof}

The matrices $\lengthmat$ and $\massmat$ are both diagonal positive definite and will not introduce any zero rows in the product $\lengthmat \fmat \massmat$, and therefore we can conclude that $\mathbf{\Omega}_2$ is full rank if $\sin(\psi_i) \neq 0 \ \forall \ i$.

% \subsection{Summary} \label{sec:conditions}
We now present the main observability result:  
\begin{theorem} \label{thm:obsv}
    The augmented state $\stateip$ is locally observable under the following control conditions:  
    \begin{enumerate}  
        \item Non-zero thruster normal components: \\ \( u_i \sin(\psi_i) \neq 0 \ \forall \ i \).
        \item Non-zero net forces: \\ 
        $u_i + \cos(\theta_i + \psi_i) f_{ext,x,i} + \sin(\theta_i + \psi_i) f_{ext,y,i} \neq 0 \ \forall \ i.$
        % $u_i \neq -\big( \cos(\theta_i + \psi_i) f_{ext,x,i} + \sin(\theta_i + \psi_i) f_{ext,y,i} \big) \ \forall \ i.$ 
    \end{enumerate}  
\end{theorem}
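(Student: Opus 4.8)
The plan is to assemble the block-rank facts already established in this section and feed the result into the Hermann--Krener observability rank condition (Theorem~1). First I would recall that, after the column swap and deletion of identically zero block rows, the observability matrix $\obsv{}$ built from the zeroth- and first-order control Lie derivatives is block diagonal with diagonal blocks $\mathbf{\Omega}_1 \in \reals{2N \times 2N}$, $\inertiamat\mtheta \in \reals{N \times N}$, and $\mathbf{\Omega}_2 \in \reals{N^2 \times N}$, so $\obsv{} \in \reals{(3N+N^2)\times 4N}$. By the cited block-matrix rank result, $\obsv{}$ has full column rank $4N$ exactly when all three blocks do; since the reduced augmented state has dimension $n = 4N$, this is precisely what the rank condition requires.

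Next I would dispatch the three blocks in turn. The block $\inertiamat\mtheta$ is a product of two positive definite matrices ($\inertiamat \in \diagpos{N}$, $\mtheta \in \pdef{N}$), hence invertible, so it is unconditionally full rank. For $\mathbf{\Omega}_1$, all four of its sub-blocks are diagonal and therefore pairwise commute, so Theorem~\ref{thm:schur} applies and $\det(\mathbf{\Omega}_1)$ reduces to $\det(-\massmat)$ times a product over the links of the scalars $u_i(u_i + \cos(\theta_i+\psi_i) f_{ext,x,i} + \sin(\theta_i+\psi_i) f_{ext,y,i})$; with $\det(-\massmat)\neq 0$ automatic, this is nonzero if and only if $u_i \neq 0$ and the net-force expression is nonzero for every $i$. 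For $\mathbf{\Omega}_2$, which vertically stacks the diagonalizations of the columns of $\lengthmat\fmat\massmat$ with $\fmat_{ij} = \kmat_{ij}\sin((\theta_i-\theta_j)-\psi_j)$, the columns have disjoint supports and hence are linearly independent as soon as $\lengthmat\fmat\massmat$ has no zero row; the preceding theorem on $\fmat$ --- which uses $\rank{\kmat} = N-1$ together with $\ker(\kmat) = \{\ones{N}\}$ and $\coker{\kmat} = \{[1,-1,1,-1,\dots]\}$ to guarantee $\kmat$ has no zero row or column --- shows $\fmat$ has no zero row whenever $\sin(\psi_i)\neq 0$ for all $i$, and left/right multiplication by the positive definite diagonal matrices $\lengthmat,\massmat$ preserves that.

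Putting the pieces together, conditions~(1) and~(2) of the statement are exactly the conjunction of these per-block requirements: condition~(1), $u_i\sin(\psi_i)\neq 0$, simultaneously supplies the $u_i\neq 0$ needed for $\mathbf{\Omega}_1$ and the $\sin(\psi_i)\neq 0$ needed for $\mathbf{\Omega}_2$, while condition~(2) is precisely the net-force inequality needed for $\mathbf{\Omega}_1$. Hence under~(1) and~(2) all three diagonal blocks are full column rank, so $\rank{\obsv{}} = 4N = n$, and the rank condition yields local observability of the transformed state $\stateip'$. I would finish by recalling that $\stateip' = \mathcal{T}(\stateip)$ is a diffeomorphism, so local observability of $\stateip'$ is equivalent to local observability of $\stateip$.

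I do not expect a serious obstacle: the substantive calculations --- the Schur reduction of $\det(\mathbf{\Omega}_1)$, the trigonometric collapse of $\fmat_{ij}$, and the kernel/cokernel analysis of $\kmat$ --- have all been front-loaded into the two block analyses and the supporting theorem, leaving the theorem itself as essentially a bookkeeping assembly. The only points that need care are verifying that the column swap and removal of zero block rows used to put $\obsv{}$ in block-diagonal form leave its rank unchanged, and checking that conditions~(1)--(2) really capture every per-block hypothesis with nothing hidden in the $*$-blocks that were conservatively set to zero.
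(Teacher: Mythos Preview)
Your proposal is correct and mirrors the paper's own argument: it assembles the previously established full-rank facts for $\mathbf{\Omega}_1$, $\inertiamat\mtheta$, and $\mathbf{\Omega}_2$, invokes the block-diagonal rank result to conclude $\rank{\obsv{}}=4N$, and then transfers local observability from $\stateip'$ to $\stateip$ via the diffeomorphism $\mathcal{T}$. The paper's proof is simply a terser version of the same bookkeeping, so there is no substantive difference in approach.
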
  
\begin{proof}  
    As shown in the previous subsections, both $\mathbf{\Omega}_1$ and $\mathbf{\Omega}_2$ are full-rank matrices. Since $\obsv{}$ is a diagonal block matrix with full-rank diagonal blocks, it follows that $\obsv{}$ itself is full-rank. Consequently, $\stateip'$ is locally observable, and since a diffeomorphism exists between $\stateip'$ and $\stateip$, it follows that $\stateip$ is also locally observable.  
\end{proof}
To the best of our knowledge, Theorem~\ref{thm:obsv} presents the first nonlinear observability result for a free-floating multi-link system that explicitly accounts for inertial parameter estimation. The theorem establishes sufficient control conditions under which the link angles, angular velocities, masses, and moments of inertia of system~\eqref{eq:multilink} can be estimated in real time using IMUs placed at the CM of each link. Although the analysis assumes a specific multi-link system, we hypothesize that similar external force and control conditions exist for multi-link systems with alternative actuation and sensing configurations. From a systems design perspective, the observability conditions require that the thrusters be oriented such that $\sin(\psi_i) \neq 0$ and have sufficient authority to counter external forces and produce a non-zero net force.

% include simulation results
\section{SIMULATION RESULTS} \label{sec:sim}
As discussed in Section \ref{sec:nlobsv}, local observability means that, within a neighborhood of each state, the state can be uniquely reconstructed in finite time from measurements along trajectories that remain close to the initial condition. Practically, this implies that for a locally observable nonlinear system, an estimator such as a Kalman filter—when initialized with a mean and covariance sufficiently close to the true state—will yield estimation errors that converge to zero, with the estimate covariance approaching the Cramér–Rao lower bound. Estimator performance thus serves as a practical indicator of system observability and will be used here to validate the analytical results of the previous section.

A three-link system is simulated with state and dynamics according to \eqref{eq:aug_x} and \eqref{eq:aug_f}. The physical parameters are listed in Table \ref{table:parameters} and were selected to be comparable to those of the \emph{LandSalp} experimental platform. The inertia was calculated from the masses and lengths according to the equation for a slender rod rotating about one of its ends: $j_i = \frac{1}{3} m_i l_i^2$.
\begin{table}[ht]
    \centering
    \begin{tabular}{|c|l|c|} 
        \hline
        \textbf{Variable} & \textbf{Description} & \textbf{Value} \\ \hline
        $\mathbf{l}$ & Link half-lengths (m) & $0.125 \cdot \ones{3}$ \\ \hline
        $\mass$ & Link masses (kg) & $0.5 \cdot \ones{3}$ \\ \hline
        $\inertia$ & Link inertias (kg$\cdot$m$^2$) & $2.6 \times 10^{-3} \cdot \ones{3}$ \\ \hline
        $\angthrust$ & Thruster Angles (rad) & $\mtx{\frac{1}{4} \pi & \frac{2}{3} \pi & -\frac{1}{2} \pi}$\\ \hline
    \end{tabular}
    \caption{Simulated three-link system parameters.} \label{table:parameters}
    \vspace{-0.3cm}
\end{table}

Each link is equipped with an IMU at the origin of its respective link frame, and IMU measurements are generated at 100 Hz according to \eqref{eq:aug_h}. For simplicity, the model does not simulate any uncontrolled external forces (such as friction and drag). The input thruster signals were selected to mimic the pulsing control signals of salps and are of similar magnitude to the actuation ability of \emph{LandSalp}. The system was simulated for 1 second, and the thruster inputs consisted of unidirectional, periodic square waves that began at 0.2 seconds, turned on for 0.1 seconds, and then off for 0.2 seconds. The initial configuration of the system is at rest with parallel links ($\anglink_0 = \zeros{3}, \ \dot{\anglink}_0 = \zeros{3}$). Snapshots of the system trajectory at 0.2-second intervals are shown in Fig. \ref{fig:snapshots}. The thruster signals and a time history of the observability criteria from Theorem \ref{thm:obsv} are shown in Fig. \ref{fig:thrusters}.
\begin{figure*}[htbp]
    \vspace{0.05cm}
    \centering
    \includegraphics[width=0.83\textwidth]{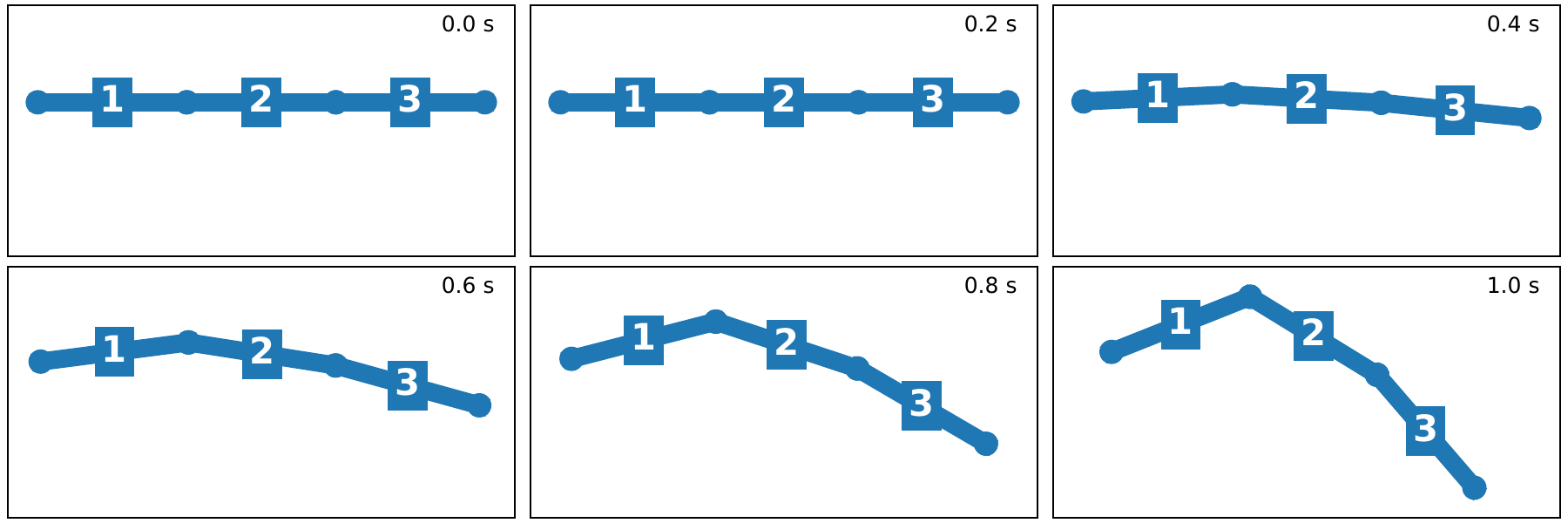}
    \caption{Snapshots of the simulated three-link system configuration at 0.2-second intervals.}
    \label{fig:snapshots}
    \vspace{-0.5cm}
\end{figure*}

\subsection{Filter Design}
An unscented Kalman filter was used for state estimation and developed using the FilterPy package \cite{filterpy}. The initial estimate mean, $\stateest_0$, was randomly selected according to \mbox{$\hat{\state}_0 \sim \mathcal{N}(\state_0, \pmat_0)$} where $\state_0$ is the true initial state and $\pmat_0$ is the initial estimate covariance as defined below. The $\blkdiag{\cdot}$ function denotes a block diagonal matrix with its inputs along the diagonal. The initial state estimate covariance values are shown in Table \ref{table:cov_p}.
\begin{align}
    \state_0 &= \mtx{\anglink_0 & \dot{\anglink}_0 & \mass & \inertia} \in \reals{12} \\
    \pmat_0 &= \blkdiag{\sigma_{\theta_0}^2 \eye{3}, \ \sigma_{\dot{\theta}_0}^2 \eye{3}, \ \sigma_{m_0}^2 \eye{3}, \ \sigma_{j_0}^2 \eye{3}} \label{eq:pmat} \in \diagpos{12}
\end{align}

\begin{table}[t]
    % \vspace{0.2cm}
    \centering
    \begin{tabular}{|c|l|c|} 
        \hline
        \textbf{Variable} & \textbf{Description} & \textbf{Value} \\ \hline
        $\sigma_{\theta_0}$ & Link angle (rad) & $0.1$ \\ \hline
        $\sigma_{\dot{\theta}_0}$ & Link angular velocity (rad/s) & $5.0 \times 10^{-3}$ \\ \hline
        $\sigma_{m_0}$ & Link mass (kg) & $0.1$ \\ \hline
        $\sigma_{j_0}$ & Link inertia (kg$\cdot$m$^2$) & $1.0 \times 10^{-4}$ \\ \hline
    \end{tabular}
    \caption{Initial estimate standard deviation values.} \label{table:cov_p}
    \vspace{-0.7cm}
\end{table}

Noisy IMU measurements on link $i$ are generated from the truth state and passed to the filter according to \mbox{$\meas_i \sim \mathcal{N}(h_i(\state), \rmat)$}, where $\rmat$ is the measurement noise covariance matrix for a single IMU and assumed to be the same for each of the simulated IMUs. Although the truth state propagation is noise-free in our simulation, a small amount of process noise was applied in the filter. This both reflects a qualitatively similar performance expected in practical applications and helps highlight differences in filter behavior when the observability conditions are satisfied versus when they are not. The process noise covariance matrix, $\qmat$, and measurement noise covariance matrix are defined in \eqref{eq:qmat}, \eqref{eq:rmat}, and Table \ref{table:cov_q}:
\begin{align}
    \qmat &= \blkdiag{\sigma_{\theta}^2 \eye{3}, \ \sigma_{\dot{\theta}}^2 \eye{3}, \ \sigma_m^2 \eye{3}, \ \sigma_j^2 \eye{3}} \label{eq:qmat} \in \diagpos{12} \\
    \rmat &= \diag{\sigma_\text{acc}^2, \ \sigma_\text{acc}^2, \ \sigma_\text{gyro}^2} \in \diagpos{3}  \label{eq:rmat}
\end{align}

% \begin{table}[ht]
%     \centering
%     \begin{tabular}{|c|l|c|} 
%         \hline
%         \textbf{Variable} & \textbf{Description} & \textbf{Value} \\ \hline
%         $\sigma_\text{acc}$ & Accelerometer (m/s$^2$) & $1.0 \times 10^{-2}$ \\ \hline
%         $\sigma_\text{gyro}$ & Gyroscope (rad/s) & $1.0 \times 10^{-3}$ \\ \hline
%     \end{tabular}
%     \caption{Measurement noise standard deviation values.} \label{table:cov_r}
% \end{table}

\subsection{Tracking Performance}
Augmented state tracking errors and their corresponding covariance bounds are shown in Fig.~\ref{fig:estimate_error}. As expected, since the IMUs directly measure the angular velocity of their respective links, the angular velocity estimates converge immediately. For the remaining states, the estimation errors and covariance values decrease when the thruster observability conditions are satisfied, consistent with the analytical results. When these conditions are not met, the covariances of the unobservable states gradually increase due to the propagation of uncertainty through the nonlinear dynamics and the addition of process noise.
\begin{table}[t]
    % \vspace{-0.5cm}
    \centering
    \begin{tabular}{|c|l|c|} 
        \hline
        \textbf{Variable} & \textbf{Description} & \textbf{Value} \\ \hline
        $\sigma_{\theta}$ & Link angle (rad) & $5.0 \times 10^{-3}$ \\ \hline
        $\sigma_{\dot{\theta}}$ & Link angular velocity (rad/s) & $1.0 \times 10^{-3}$ \\ \hline
        $\sigma_{m}$ & Link mass (kg) & $1.0 \times 10^{-2}$ \\ \hline
        $\sigma_{j}$ & Link inertia (kg$\cdot$m$^2$) & $7.0 \times 10^{-6}$ \\ \hline
        $\sigma_\text{acc}$ & Accelerometer (m/s$^2$) & $1.0 \times 10^{-2}$ \\ \hline
        $\sigma_\text{gyro}$ & Gyroscope (rad/s) & $1.0 \times 10^{-3}$ \\ \hline
    \end{tabular}
    \caption{Process and measurement noise values.} \label{table:cov_q}
    \vspace{-0.7cm}
\end{table}

A notable aspect of these results can be seen in the behavior of the inertia estimates. The accuracy of inertial parameter estimation depends on the degree of excitation: mass estimates require excitation through forces, while inertia estimates require excitation through torques. In this simulation, link~2 is hypothesized to be excited less than the other links, which is reflected in its comparatively larger inertia covariance. This qualitative behavior—that estimator performance is proportional to state excitation—has two important implications:
\begin{enumerate}
    \item It motivates the design of “observability-optimal” trajectories to minimize estimation covariance.
    \item It highlights the importance of accurate process and measurement models. Even theoretically observable states can become effectively inestimable if they are insufficiently excited or if their signals are masked by process and measurement noise.
\end{enumerate}
\vspace{-0.15cm}
\begin{figure*}
    \vspace{0.05cm}
    \centering
    \includegraphics[width=\linewidth]{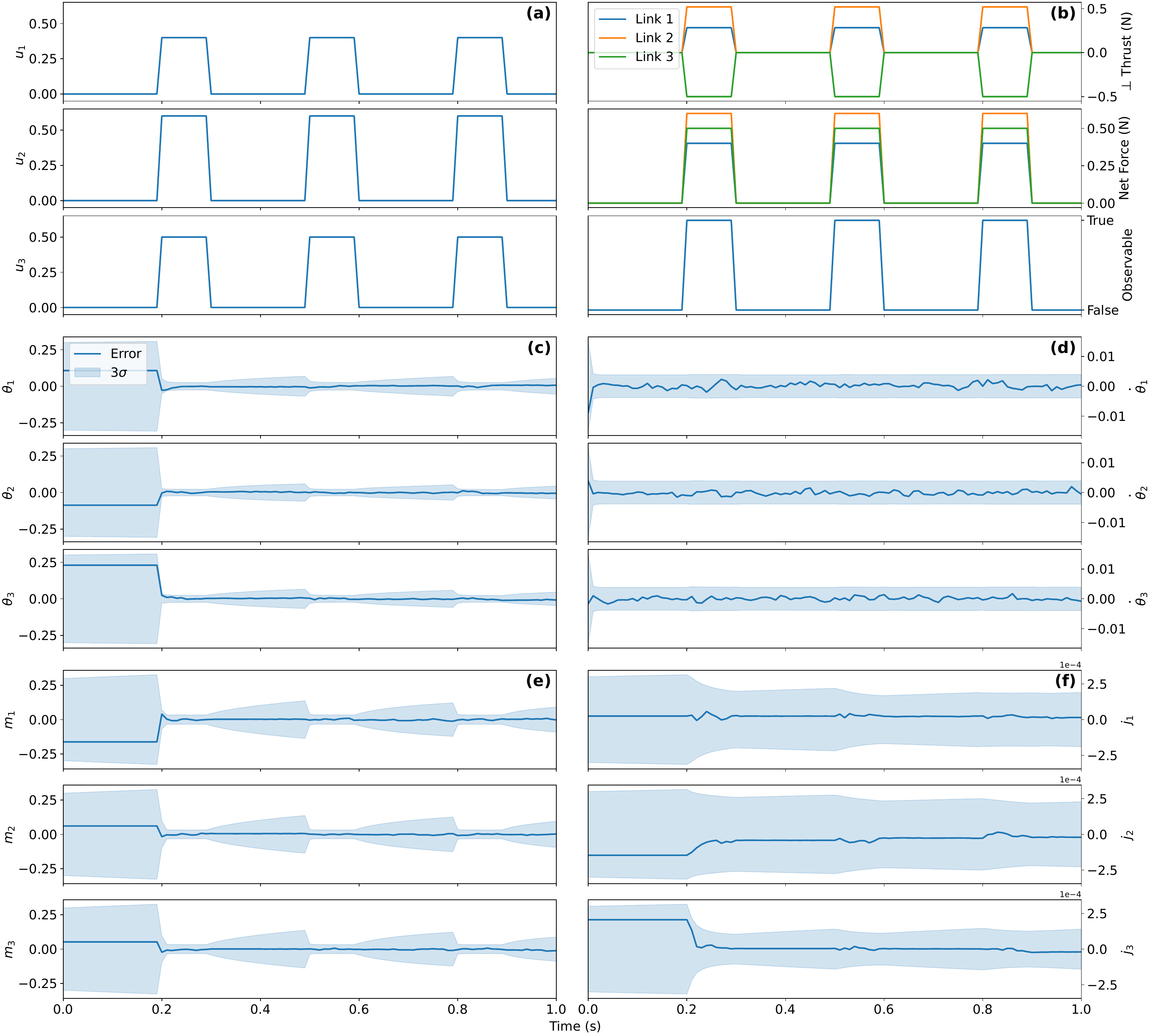}
    \caption{Simulation results: (a) thruster input forces (N); (b) observability conditions from Theorem~\ref{thm:obsv}; (c) estimation errors of link angles with $3\sigma$ bounds; (d) estimation errors of link angular velocities with $3\sigma$ bounds; (e) estimation errors of link masses with $3\sigma$ bounds; and (f) estimation errors of link inertias with $3\sigma$ bounds.}
    \label{fig:thrusters} \label{fig:estimate_error}
    \vspace{-0.5cm}
\end{figure*}

\section{CONCLUSIONS} \label{sec:conclusion}
% In this work, we developed a planar multi-link model with link thrusters that is representative of a chain of salp-inspired robots by modifying an existing snake robot model. We then derived the sensor measurement equation for an IMU placed on an arbitrary link and proved that the system’s link angles, angular velocities, masses, and moments of inertia are locally observable with control provided the conditions in Theorem \ref{thm:obsv} are met. The derived observability conditions were then demonstrated in simulation for a three-link system. In the near term, future work will focus on validating these results through hardware experiments using \emph{LandSalp}. In the long term, future work will explore how the observability results are affected when each link is modeled as a soft, flexible robotic structure rather than the rigid-body assumption used in this study.

In this work, we adapted a multi-link snake robot model to represent a chain of salp robots, derived the measurement equation for an IMU on an arbitrary link, and proved that the system’s link angles, angular velocities, masses, and moments of inertia are locally observable under the control conditions of Theorem~\ref{thm:obsv}. We demonstrated these conditions in simulation with a three-link system. Near-term work will focus on validating the analytical findings through hardware experiments using \emph{LandSalp}, while longer-term efforts will examine how observability is affected when modeling each link as a soft, flexible structure rather than a rigid body.

% references
\vspace{-0.15cm}
\bibliography{references}

\end{document}